\newtheorem{lemma}{Lemma}
\newtheorem{definition}{Definition}
\newtheorem{assum}{Assumption}
\newtheorem{remark}{Remark}
\def\msquare{\mathord{\scalerel*{\Box}{gt}}}
\def\mdiamond{\mathord{\scalerel*{\Diamond}{gt}}}
\providecommand\dotdiamond{\mathpalette\@barred\mdiamond} \def\@barred#1#2{\ooalign{\hfil$#1\cdot$\hfil\cr\hfil$#1#2$\hfil\cr}}  \makeatother
\providecommand\dotbox{\mathpalette\@burrow\msquare} \def\@burrow#1#2{\ooalign{\hfil$#1\cdot$\hfil\cr\hfil$#1#2$\hfil\cr}}  \makeatother
\newcommand{\state}{\textbf{x}}
\newcommand{\ctrl}{\textbf{u}}
\newcommand{\disturb}{w}
\newcommand{\sagx}{x}
\newcommand{\local}{^{\rm loc}}
\newcommand{\nextq}{_{q+1}}
\newcommand{\currq}{_{q}}
\newcommand{\ith}{^{\rm (i)}}
\newcommand{\jth}{^{\rm (j)}}
\newcommand{\inp}{\bm{\xi}}
\newcommand{\outp}{\bm{z}}
\begin{document}

\title{\textbf{Probabilistically-Safe Bipedal Navigation over Uncertain Terrain via Conformal Prediction and Contraction Analysis}}

%\author{anonymous author(s)}

\author{Kasidit~Muenprasitivej*, Jesse~Jiang*, Abdulaziz~Shamsah*, Samuel~Coogan, and Ye~Zhao 
\thanks{This work was supported in part by a National Science Foundation Graduate Research Fellowship under grant \#DGE-2039655.}
\thanks{The authors are with the Institute of Robotics and Intelligent Machines, Georgia Institute of Technology, Atlanta, GA 30332, USA (email: \{kmuenpra3, chou\}@gatech.edu, ye.zhao@me.gatech.edu). 
}}
\author{Kasidit~Muenprasitivej$^{1}$, Ye~Zhao$^{2}$, and Glen Chou$^{1,3}$
% \thanks{This work was supported in part by the Office of Naval Research (ONR) Award #N000142312223, National Science Foundation (NSF) grants #IIS1924978, #CMMI-2144309, #FRR-2328254, USDA #2023-67021-41397, and NSF Graduate Research Fellowship under grant \#DGE-2039655.}
\thanks{$^{1}$Daniel Guggenheim School of Aerospace Engineering, Georgia Institute of Technology, Atlanta, GA, USA (e-mail: kmuenpra3@gatech.edu)}
\thanks{$^{2}$George W. Woodruff School of Mechanical Engineering, Georgia Institute of Technology, Atlanta, GA, USA (e-mail: ye.zhao@me.gatech.edu)}
\thanks{$^{3}$School of Cybersecurity and Privacy, Georgia Institute of Technology, Atlanta, GA USA (e-mail: chou@gatech.edu).}
}

\maketitle

\thispagestyle{empty}

\begin{abstract}
We address the challenge of enabling bipedal robots to traverse rough terrain by developing probabilistically safe planning and control strategies that ensure dynamic feasibility and centroidal robustness under terrain uncertainty. Specifically, we propose a high-level Model Predictive Control (MPC) navigation framework for a bipedal robot with a specified confidence level of safety that (i) enables safe traversal toward a desired goal location across a terrain map with uncertain elevations, and (ii) formally incorporates uncertainty bounds into the centroidal dynamics of locomotion control. To model the rough terrain, we employ Gaussian Process (GP) regression to estimate elevation maps and leverage Conformal Prediction (CP) to construct calibrated confidence intervals that capture the true terrain elevation. Building on this, we formulate contraction-based reachable tubes that explicitly account for terrain uncertainty, ensuring state convergence and tube invariance. In addition, we introduce a contraction-based flywheel torque control law for the reduced-order Linear Inverted Pendulum Model (LIPM), which stabilizes the angular momentum about the center-of-mass (CoM). This formulation provides both probabilistic safety and goal reachability guarantees. For a given confidence level, we establish the forward invariance of the proposed torque control law by demonstrating exponential stabilization of the actual CoM phase-space trajectory and the desired trajectory prescribed by the high-level planner. Finally, we evaluate the effectiveness of our planning framework through physics-based simulations of the Digit bipedal robot in MuJoCo\footnote[1]{Videos of the simulated experiments in Mujoco can be found at \url{https://youtu.be/hoh2ilXRj4I?si=_qX5MEmHYbPuc1fx}}.
\end{abstract}
\section{Introduction}
Bipedal locomotion holds great promise for navigating unstructured and challenging environments, as they can adapt to irregular terrain through discrete and precisely controlled footstep placement~\cite{torres2022legged, gibson2022terrain}. However, locomotion over uncertain terrain remains susceptible to instability, particularly when consecutive footsteps must be placed on surfaces with significant geometric variability and centroidal momentum needs to be regulated accurately. Many biped navigation frameworks \cite{huang2023efficient,Muenprasitivej2024,shamsah2024terrainaware} address terrain uncertainty at the high-level planning but omit corrective motion strategies in low-level control with confidence-guaranteed on the uncertainty. To address this challenge, we develop a terrain-uncertainty-aware planning and control framework that formally quantifies terrain elevation uncertainty and leverages it to achieve provably-safe footstep planning with a specified probability threshold, while guaranteeing forward invariance of the centroidal states in a robust tube around the desired motion plan. Such a framework ensures locomotion balance and supports the generation of probabilistically-safe and dynamically-feasible locomotion plans over long horizons (see Fig.~\ref{fig:highlight}).

\begin{figure}[t]
    \centering
    \includegraphics[width=0.98\linewidth]{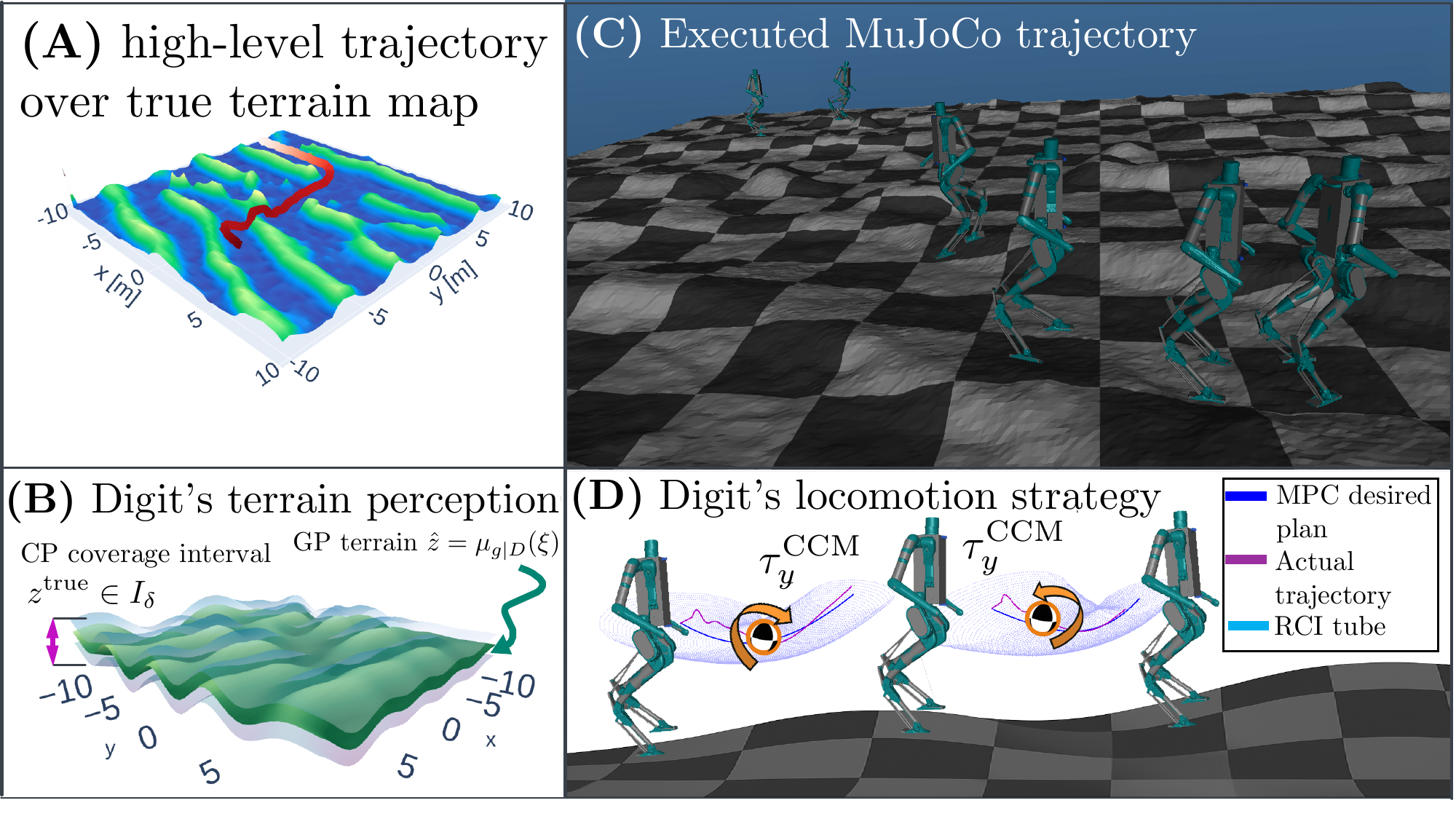}
    \caption{(A) A bird's-eye view of the robot navigation path over the true rough terrain map. (B) A visualization of terrain map estimated by Gaussian Process (GP) and conformal prediction (CP). (C) The bipedal robot Digit navigates through an environment
with rough terrain in our MuJoCo simulation. (D) Digit's flywheel torque control law maintains the CoM trajectory within the contraction-based robust control invariant tube on the robot with full-order dynamics.
}
    \label{fig:highlight}
    % \vspace{-0.3in}
\end{figure}

In our bipedal robot navigation framework, we leverage a reduced-order robot model, i.e., the Linear Inverted Pendulum Model (LIPM), to design a high-level planner for the center-of-mass (CoM) and footstep plans under a user-specified probability of safety. In particular, we derive uncertainty bounds on the LIPM-based robot dynamics caused by the terrain using conformal prediction (CP). Then, we use the derived uncertainty bounds to construct contraction-based reachable tubes around the nominal plans given by the high-level planner, while, at the low-level, a flywheel torque controller is designed to stabilize centroidal angular momentum induced by terrain variations. The integration of both uncertainty-aware CoM planning and the corrective torque controller guarantees dynamic safety under a user-defined probability, and the motion plans executed online are guaranteed to remain in a compact forward invariant set around the desired CoM motion plan. The key contributions and organization of this work are as follows:

\begin{itemize}
    \item We adopt a Gaussian process (GP) regression model with an Attentive Kernel (AK) for mapping nonstationary terrain. We then couple the GP model with CP to construct data-calibrated confidence intervals, ensuring high-confidence coverage of the true terrain elevations.

    \item We propose an uncertainty-informed model-predictive-control (MPC) framework for bipedal navigation over rough terrain. Given a predefined high probability of safety, the MPC guarantees provably-safe CoM planning toward a desired goal, up to a specified probability, via CP-informed footstep sequence selections and minimization of traversal terrain slopes. 
    
    \item To reliably track MPC plans, we derive a contraction-based flywheel torque control law for correcting off-nominal centroidal angular momentum caused by terrain perturbations. We use contraction theory and the CP bounds to compute time-varying reachable tubes, guaranteeing forward invariance and exponentially-stabilizing behavior toward the desired trajectory. 

    \item Finally, we discuss the gap of our contraction analysis relative to the full-order robot dynamics
    and demonstrate the validity of our planning and control strategy on a Mujoco simulation of the Digit robot.
\end{itemize}
\section{Related Work}
% High level for biped navigation over irregular terrain
\subsection{Bipedal Locomotion over Rough and Uncertain Terrain} Numerous works have studied long-horizon bipedal planning over irregular, uncertain terrain \cite{gibson2022terrain, shamsah2024terrainaware, Muenprasitivej2024, huang2023efficient}. The work in~\cite{gibson2022terrain} presents a terrain-adaptive controller with piecewise linear terrain approximation and friction cone constraints to plan foot placements, but requires substantial prior terrain knowledge and ignores terrain uncertainty. As a closely related work to ours, the authors in~\cite{shamsah2024terrainaware} propose a terrain-aware MPC that penalizes the estimated slope to enable stable locomotion on steep inclines, but incorporates terrain uncertainty only in a traversability score for high-level navigation planning, not for footstep-level planning. The work of \cite{huang2023efficient} combines a Control-Lyapunov-Function-based controller with a sampling-based planner and traversability metrics to guide bipeds over undulating terrain while avoiding high-cost regions to preserve locomotion feasibility. Similarly, the work in \cite{Muenprasitivej2024} learns rough terrain online via a GP but conservatively bypasses high-elevation or uncertain regions for safety. Most of these approaches restrict robots to a limited set of navigation paths, whereas our work enables traversal of a broad range of uncertain regions while guaranteeing safe locomotion from a probabilistic perspective.

\subsection{Reduced-order  Locomotion Planning and Control}
%ROM-based planning and control strategies for correcting full-order model toward desired behavior
In the regime of online optimal control for humanoid centroidal balancing, reduced-order models (ROMs) such as the LIPM \cite{kajita2001lip} have been widely used to maintain tractability and predict centroidal motion. With reduced computational burden, these models can be integrated with step-adaptation heuristics and centroidal momentum regulation to stabilize dynamic bipedal locomotion under model mismatch between the ROM-based planning and low-level control layers using full-body dynamics~\cite{dantec2024fromcentriod}. The work of \cite{zhao2017robust} introduces the Prismatic Inverted Pendulum Model (PIPM) for non-periodic CoM phase-space trajectories with 3D footstep planning on uneven terrain, using dynamic programming to compute flywheel torque for robustness to disturbances, while \cite{xiong2019orbitchar} employs the Hybrid-LIP (H-LIP) model with orbital energy characterization to design foot placements and integrates Backstepping-Barrier Functions in a QP framework to regulate leg forces and body height. The work in \cite{gibson2022terrain} uses the angular momentum LIP (ALIP) model for MPC-based footstep planning, where virtual constraints map ALIP outputs to joint commands, enabling accurate CoM and swing-foot tracking on uneven terrain. We leverage this ALIP model with flywheel torque control and use contraction analysis to derive a stabilizing controller that guarantees high-probability tracking of full-order centroidal dynamics. 

\subsection{Contraction Theory}
%Contraction Theory
\looseness-1To address mismatches between ROM-based trajectories and the full-order dynamics, we use control contraction theory~\cite{lohmiller1998contraction} to design a tracking controller. Contraction-based controllers for control-affine systems~\cite{manchester2017CCM} provide incremental stabilizability and have been used to construct tracking tubes under disturbances in robotics~\cite{singh2023contract}. \cite{singh2023contract} designs uniform upper-bound ellipsoids around trajectories given disturbance bounds, while \cite{chou2022modelerrorprop} proposes spatially varying bounds that better reflect learned dynamics. Most work applies contraction-based tubes to high-level planning for obstacle avoidance, with limited focus on phase-portrait analysis of robot dynamics for low-level motion feasibility. To our knowledge, this is the first use of a contraction-based control for bipedal phase portraits, minimizing discrepancies between ROM references and full-order dynamics and enabling robust tracking under environmental and model uncertainties.

\subsection{Conformal Prediction}
To quantify uncertainty for robotic planning, \cite{lindemann2023CP} applies CP to quantify error in neural-network-predicted dynamic paths, guaranteeing true-path coverage at a predefined probability for collision avoidance. \cite{dawei2021uncertain-aware} derives disturbance bounds from the GP posterior for contraction analysis, but limited data can yield overly conservative bounds, and kernel assumptions may underestimate terrain error. In fact, \cite{pion2025GPwithCP} advocates using CP to refine GP prediction intervals. This work employs CP to tighten GP-derived bounds and calibrate them directly to data while preserving high-confidence safety for novel applications in locomotion.
\section{Preliminaries}\label{section: Preliminaries}

\subsection{Gaussian Processes}
\label{sec:GP definition}

To map terrain elevation with uncertainty for bipedal navigation, we employ GP regression:  

\begin{definition}[Gaussian Process Regression]
\label{def:GP}%
A GP models a function $g(\inp) \sim \mathcal{N}(\mu(\inp),\kappa(\inp,\inp))$ with mean $\mu:\mathbb{R}^n \to \mathbb{R}^n$ and covariance $\kappa:\mathbb{R}^n \times \mathbb{R}^n \to \mathbb{S}_{+}^n$.  
Given $m$ samples $D = \{(\inp\ith,\outp\ith)\}_{i=1}^m$, where $\inp\ith \in \mathbb{R}^n$ is the input (a terrain location $\inp\ith = [ x\ith, y\ith ]$ in the global frame) and $\outp\ith$ the elevation observed with Gaussian noise variance $\sigma_{\nu }^2$, let $K \in \mathbb{R}^{m\times m}$ be the covariance matrix with $K_{ij}=\kappa(\inp\ith,\inp\jth)$. For a test point $\inp'$, define $k(\inp')=[\kappa(\inp',\inp^1)\;\kappa(\inp',\inp^2)\;\ldots\;\kappa(\inp',\inp^m)]^T \in \mathbb{R}^m$. The predictive distribution of $g$ at $\inp'$ is Gaussian with mean and variance
\begin{subequations}
\label{eq:gp_posterior}
\begin{align}
     \mu_{g|D}(\inp') &= k(\inp')^T(K+\sigma_{\nu }^2I_m)^{-1}Z, \label{eq:gp_mean}\\
     \sigma_{g|D}^2(\inp') &= \kappa(\inp',\inp') - k(\inp')^T(K+\sigma_{\nu }^2I_m)^{-1}k(\inp'), \label{eq:gp_var}
\end{align}
\end{subequations}
where $I_m$ is the identity and $Z = [z^1,\ldots,z^m]^T$.
\end{definition}

\subsubsection{Attentive Kernel} 
\looseness-1We use the Attentive Kernel (AK)~\cite{chen2022ak} for terrain elevation estimation. The AK adapts to spatial variability through a neural network combining multiple radial basis function (RBF) kernels, with a secondary network assigning membership vectors to decouple nearby correlations. In contrast, the stationary RBF kernel yields uniformly smooth predictions $\kappa^{\rm RBF}(\inp\ith,\inp\jth) = \sigma_{f}^2 \exp(-\|\inp\ith - \inp\jth\|^2 / 2\ell^2)$, with variance $\sigma_{f}^2$ and length-scale $\ell$. 
The AK is defined as
\begin{equation}
\nonumber
\kappa^{\rm AK}(\inp\ith,\inp\jth) = \alpha \bar{z}^T \bar{z}' + \textstyle\sum_{m=1}^{M} \bar{w}_m \kappa_m^{\rm RBF}(\inp\ith,\inp\jth)\bar{w}_m',
\end{equation}
where $\alpha$ is a coefficient, $\bar{w}$ and $\bar{z}$ are learned weight and membership vectors, and $\{\kappa_m^{\rm RBF}(\inp\ith,\inp\jth)\}_{m=1}^M$ are RBF kernels with length-scales $\{\ell_m\}_{m=1}^M$.

\subsection{Robot Model}\label{subsection: Robot Model}
\subsubsection{Reduced-order robot dynamics}

\begin{figure}[t]
    \centering
    \subfigure[High-level LIPM plan (discrete)]{\includegraphics[width=0.48\linewidth]{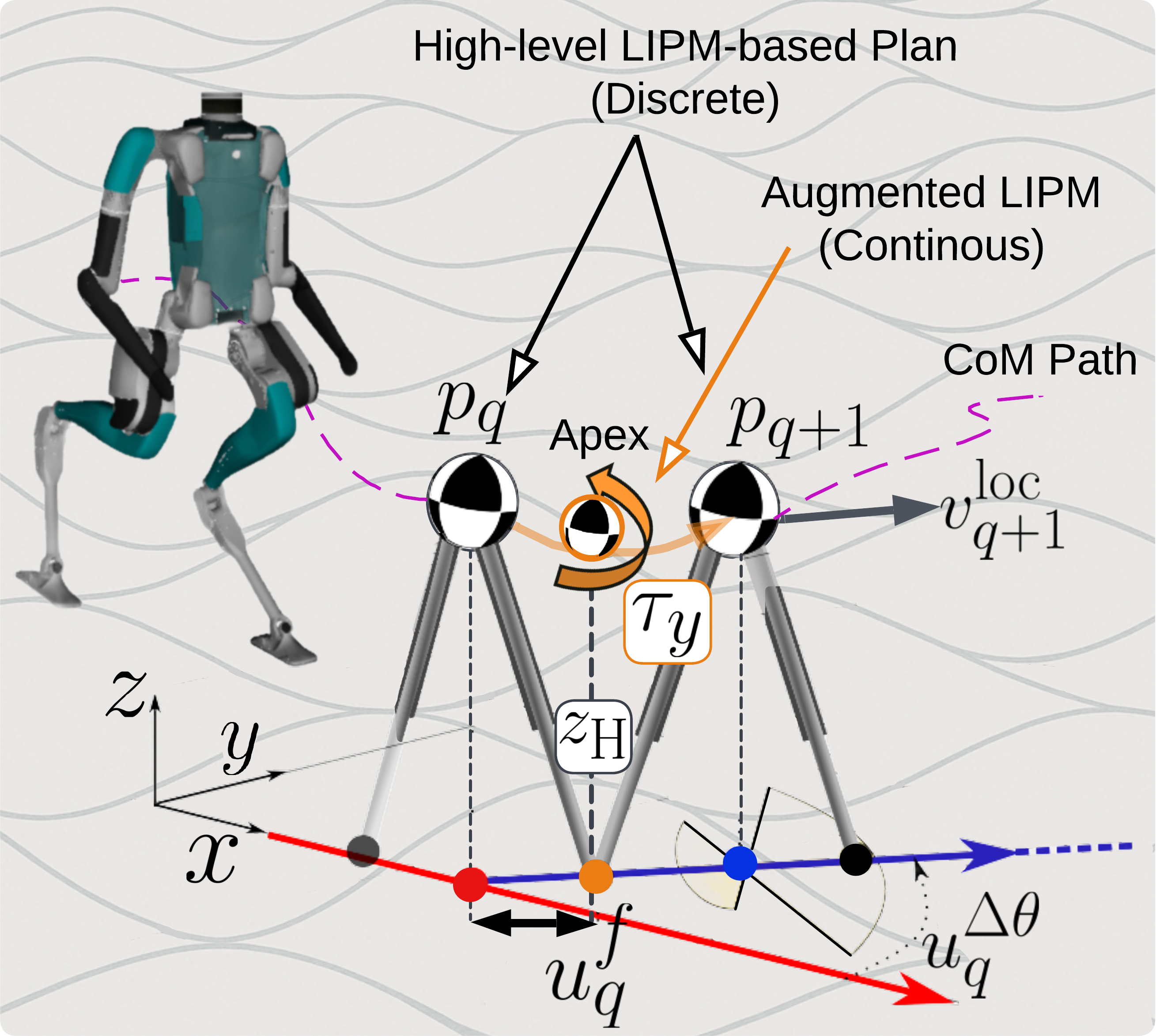}}
    \subfigure[Augmented LIPM (continuous)]{\includegraphics[width=0.5\linewidth]{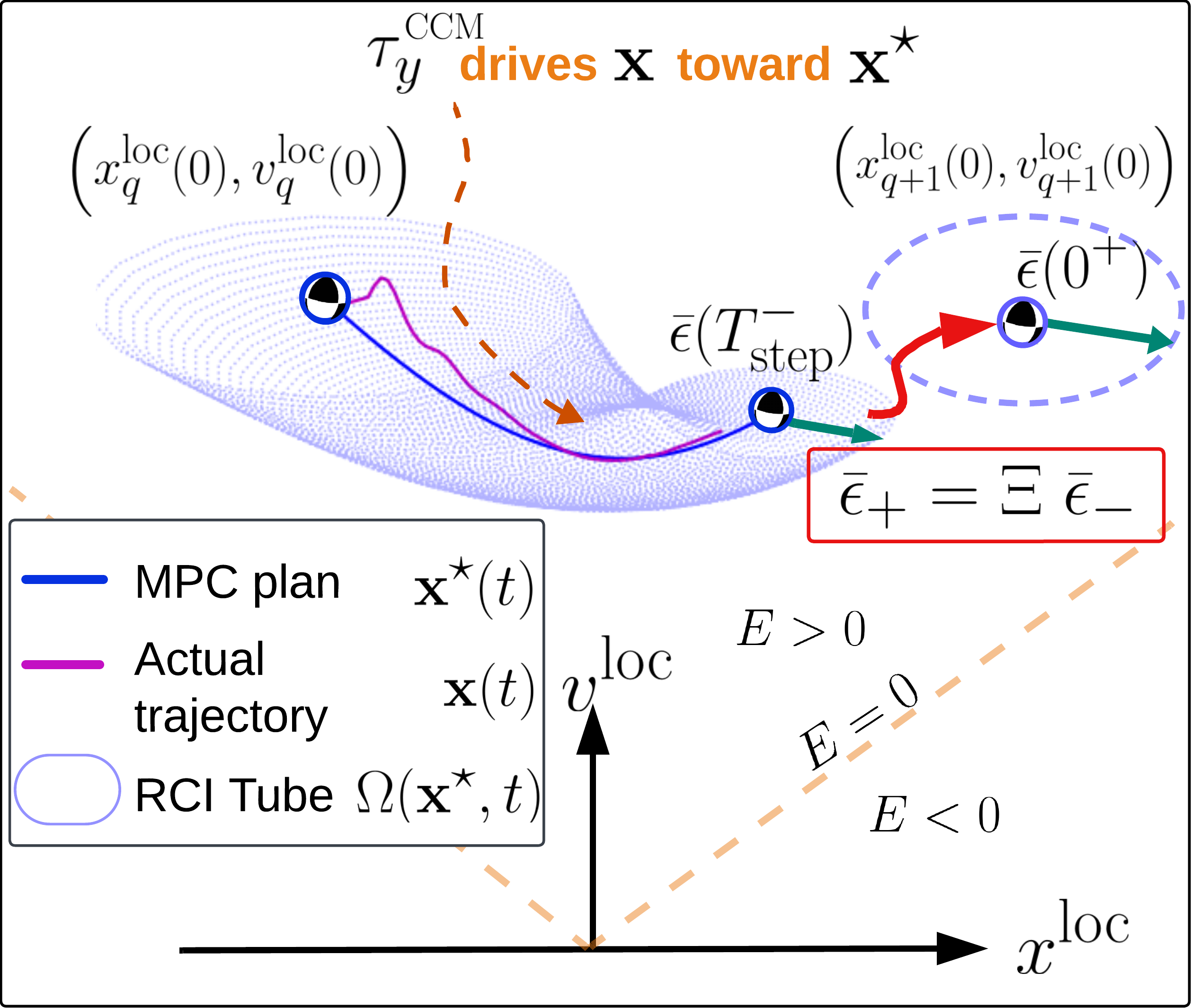}}
    % \vspace{-10pt}
\caption{(a) The high-level planner's global dynamics is based on the Linear Inverted Pendulum Model (LIPM). For contraction analysis at the low-level, we use the Augmented LIP Model (Aug-LIPM) with flywheel torque $\tau_y$ about the CoM.  
(b) Sagittal phase portrait for one walking step from $q^{\rm th}$ to $(q+1)^{\rm th}$ in the local frame of the $q^{\rm th}$ footstep. The desired MPC-guided trajectory $\state^\star(t)$ (dark blue) is designed by \eqref{eq:sagittal_cons} given the MPC output $(x\local\currq,v\local\currq,u^f\currq)$. The true CoM trajectory from full-order dynamics $\state(t)$ (pink) is regulated to track the desired plan via CCM control law $\tau_y^{\rm CCM}$. The RCI tube $\Omega(\state^\star ,t)$ (light blue) is constructed around the desired trajectory; orbital energy $E$ is represented by the asymptotic slope line (orange), and tube-bound propagation is shown via the saltation matrix $\Xi$.
}
    \label{fig:LIPM_dynamics}
\end{figure}

\looseness-1We first introduce the continuous LIP dynamics in the sagittal direction and in the local frame of the robot's stance foot, which is governed by $\ddot{x}\local=\omega^2x\local$,
where $x\local \in \mathbb{R}$ is the CoM sagittal position in the local frame of current stance foot,  $\omega = \sqrt{g/z_{\rm H}}$ is the asymptotic slope, with the gravitational acceleration $g$  and the CoM height at the apex state $z_{\rm H}$\footnote{Apex state is the state when the CoM is directly on top of the foot.}, as seen in Fig. \ref{fig:LIPM_dynamics}(a), which is constant for all walking steps (i.e., the CoM surface plane is compliant with the terrain shape \cite{zhao2017robust}).
% and $x_0=-u^f$ is the initial CoM sagittal position in the frame of current stance foot. 
Given the local frame of the stance foot, where the distance to the CoM position is denoted by $-u^f$ in the sagittal direction, the closed-form evolution of the sagittal dynamics can be expressed as  
\begin{subequations}\label{eq:sagittal_cons}
    \begin{align}
 x^{\rm loc}(t) &= x^{\rm loc}(0) + \frac{\sinh(\omega t)}{\omega}\,v^{\rm loc}(0) + \big(1-\cosh(\omega t)\big)u^f, \label{eq:sagittal_x_cons}\\ 
 v^{\rm loc}(t) &= \cosh(\omega t)\, v^{\rm loc}(0) - \omega\sinh(\omega t)\,u^f.\label{eq:sagittal_velocity_cons}
\end{align}
\end{subequations}

For high-level locomotion planning, we time-discretize the LIP model with each discrete state defined at the foot-switching instant. By fixing constant the step duration $T_{\rm step}$ in \eqref{eq:sagittal_cons}, the state at the $(q+1)^{\rm th}$ step is $x\local_{q+1} \doteq x\local_q(T_{\rm step})$, where $x\local_q(T_{\rm step})$ is the state of the continuous dynamics \eqref{eq:sagittal_x_cons} evaluated at time instant $T_{\rm step}$.
leading to the discrete sagittal LIP local dynamics: 
\begin{subequations}
\begin{align}
 x^{\rm loc}_{q+1} &= x^{\rm loc}_{q} + \frac{\sinh(\omega T_{\rm step})}{\omega}\,v^{\rm loc}_{q} + \big(1-\cosh(\omega T_{\rm step})\big)u^f_q, \label{eq:sagittal_x}\\
 v^{\rm loc}_{q+1} &= \cosh(\omega T_{\rm step})\, v^{\rm loc}_{q} - \omega\sinh(\omega T_{\rm step})\,u^f_q.\label{eq:sagittal_velocity}
\end{align}
\end{subequations}

 Following the formulation in~\cite{shamsah2024social}, the bipedal global state at timestep $q$ is defined as $\state_q = (p_q, v_q\local, \theta_q)\in \mathbb{R}^5$, where $p_q = (x_q,y_q,z_q)$ is the CoM position in the global frame, $v_q\local$ is the local sagittal velocity, and $\theta_q$ is the global heading angle. The control variables are  $\ctrl_q = (u_q^f, u_q^{\Delta \theta})\in\mathbb{R}^2$, where $u_q^f$ is the sagittal foot position relative to the CoM, and $u_q^{\Delta \theta}$ is the global heading change between two consecutive steps, as shown in Fig. \ref{fig:LIPM_dynamics}(a).

Applying a coordinate transformation with respect to $\theta_q$ to \eqref{eq:sagittal_x}, \eqref{eq:sagittal_x} yields the 3-D LIP dynamics in the global coordinates (derivations in~\cite{shamsah2024social}):  
\begin{subequations}
\label{eq:lip_dynamics}
\begin{align}
 x\nextq &= x_q + \Delta x\local \cos(\theta_q), \\
 y\nextq &= y_q + \Delta x\local \sin(\theta_q), \\
 z\nextq &= z_q + \nabla_{x,y} \mu_{g\mid D}(x_{q}, y_{q})\Delta x^{\rm loc} , \\
 v\local\nextq &= \cosh(\omega T) v\local\currq - \omega \sinh(\omega T) u^f\currq, \\
 \theta\nextq &= \theta\currq + u^{\Delta \theta}\currq,
\end{align}
\end{subequations}
where $\Delta \sagx\local = \sagx\local\nextq - \sagx\local\currq$ is derived from~\eqref{eq:sagittal_x}. The heading update is given by $u^{\Delta \theta}\currq = \theta\nextq - \theta\currq$ and $\nabla_{x,y} \ \mu_{g|D}(x\currq,y\currq)$ is the estimated slope of the GP terrain map \cite{shamsah2024terrainaware}. For compactness, system~\eqref{eq:lip_dynamics} is denoted as $\state \nextq = \Phi(\state \currq, \ctrl \currq)$.

\medskip
\subsubsection{Augmented LIP dynamics}\label{sec:aug_lipm}
For contraction analysis, we employ an Augmented LIP Model (Aug-LIPM) with the addition of flywheel torque control about the local pitch angle $\tau_y$ and bounded disturbance input $\disturb$, which is governed by 
\begin{equation}\label{eq:AugLIP_doubleint}
    \ddot{x}\local=\omega^2x\local- \textstyle\frac{\omega^2}{mg}\tau_y + \disturb,
\end{equation}
for a bipedal robot with mass $m$. We assume $\disturb\in \mathcal{W} \subseteq \mathbb{R}$, where $\mathcal{W}$ is a compact set. We provide the state-space form of the Aug-LIP for the ease of notation interchange in contraction analysis (Sec. \ref{sec:CCM}) as:
% \vspace{-0.05in}
\begin{align}\label{eq:AugLIP_dyn_CCM}
\underbrace{\begin{bmatrix}
\dot{x}\local \\
\dot{v}\local
\end{bmatrix}}_{\dot{\state}\local}
&=
\underbrace{\begin{bmatrix}
0 & 1 \\
\omega^2 & 0
\end{bmatrix}}_{A}
\underbrace{\begin{bmatrix}
x\local \\
v\local
\end{bmatrix}}_{\state\local}
+
\underbrace{\begin{bmatrix}
0 \\
-\frac{\omega^2}{mg}
\end{bmatrix}}_{B}
\underbrace{\tau_y}_{\ctrl\local}
+
\underbrace{\begin{bmatrix}
0 \\
1
\end{bmatrix}}_{B_{\disturb}}\disturb.
\end{align}

\subsection{Conformal Prediction}\label{sec:CP}
Consider a collection of $k+1$ random variables $R^{(0)},\hdots,R^{(k)}$ that are \textit{exchangeable}\footnote{Exchangeability means that the joint distribution of $R^{(0)},\hdots,R^{(k)}$ is invariant under any permutation $\sigma$ of the indices $\{0,\hdots, k\}$. It is a weaker assumption than independence and identical distribution (i.i.d.).}. These variables, referred to as \emph{nonconformity scores}, quantify the deviation between predictions and observations. In supervised learning, a common score is   $R\ith = R(\inp\ith,\outp\ith|D)=\big|\outp\ith - \mu_D(\inp\ith)\big|$,
where $\mu_D(\cdot)$ is a predictive model trained on dataset $D$ to estimate the observation $\outp\ith$ from input $\inp\ith$. Larger scores correspond to poorer predictive accuracy. The aim of CP is to bound the nonconformity score of a query point $R^{(0)} = |\outp'_{\rm true}-\mu_D(\inp')|$ with high probability. For failure rate $\delta \in (0,1)$, we seek a threshold $\mathcal{C}$ such that $P(R^{(0)} \leq \mathcal{C}) \geq 1-\delta$.  

A quantile-based approach \cite[Lemma 1]{tibshirani2019conformal} constructs $\mathcal{C}$ as the $(1-\delta)$-quantile of the empirical distribution of $\{ R^{(1)},\hdots,R^{(k)}\}\cup\{\infty\}$ sorted in non-decreasing order. With this approach, we can define the threshold $\mathcal{C} = R^{(p)}$ where $p = \lceil (k+1)(1-\delta) \rceil$, where $\lceil \cdot \rceil$ is the ceiling function. 

\begin{definition}[Split Conformal Prediction]
\label{def:splitCP}
Split CP provides computational efficiency by partitioning the dataset $D = D^{\rm train} \cup D^{\rm cal}$ into disjoint training and calibration sets. The predictor $\mu(\cdot)$ is fitted on $D^{\rm train}$, and nonconformity scores $R\ith$ are computed on $(\inp\ith, \outp\ith) \in D^{\rm cal}$ for $i=1,\hdots,k$ with $k = |D^{\rm cal}|$. For a query point $\inp'$ with unknown truth value $\outp^{\rm true}$, the prediction interval $I_\delta$ is given by:
\begin{equation}\label{eq:confidence_interval}
     I_\delta(\inp') = \big\{ \outp \in \mathbb{R} : R(\inp', \outp \ | D^{\rm train}) \leq \mathcal{C} \big\},
\end{equation}
where $\mathcal{C} = R^{(p)}$ and failure rate $\delta$ is user-defined. Under exchangeability of the data, the coverage interval \eqref{eq:confidence_interval} ensures $
\mathbb{P}\big(\outp^{\rm true} \in I_\delta(\inp')\big) \geq 1 - \delta$, or equivalently,
\begin{align}\label{eq:coverage_prob}
 \mathbb{P}\big(\outp^{\rm true} \in [\mu_{D^{\rm train}}(\inp') - \mathcal{C} , \mu_{D^{\rm train}}(\inp') + \mathcal{C}]\big) \geq 1 - \delta.
\end{align}
\end{definition}

\subsection{Control Contraction Metrics}\label{sec:CCM}
Contraction theory studies incremental stability by examining differential dynamics between neighboring trajectories. In this work, we use control contraction metrics (CCMs) to stabilize our reduced-order model \cite{manchester2014universalstab}. 
For simplicity, all discussion in this section is presented in the local frame of the current foot stance, and the superscript $\local$ is omitted. Given a linear time-invariant system of form \eqref{eq:AugLIP_dyn_CCM} with state and control $\state \in \mathbb{R}^{n_x}$, $\ctrl \in \mathbb{R}^{n_u}$ under disturbance $\mathbf{\disturb} \in \mathbb{R}^{n_w}$, a CCM defines a state-feedback tracking control law that ensures that any \textit{disturbed} system trajectory $\state(t)$ converges toward a desired \textit{disturbance-free} reference trajectory $\{\state^\star(t),\ctrl^\star(t)\}_{t \geq 0}$ exponentially quickly, i.e.,
for any $\state(0)$, $\lim_{t\to\infty} \|\state(t) - \state^\star(t)\|_2 = 0$, there exists constants $\Lambda, \lambda > 0$ such that $\|\state(t) - \state^\star(t)\|_2 \leq \Lambda e^{-\lambda t} \|\state(0) - \state^\star(0)\|_2 $. The decay rate $\lambda$ is called the \textit{contraction rate}. Specifically, for LTI systems, a CCM $ M\in \mathbb{S}_+^{n_x}$ satisfies the following condition
\begin{equation}\label{eq:strong} 
\small A^\top M + M A - M B B^\top M \preceq -2\lambda M.
\end{equation}
The resulting feedback law $\ctrl(t) = -\frac{1}{2}\rho B^\top M(\state(t)-\state^*(t))$ renders the noise-free, closed-loop system incrementally exponentially stable, converging to $\state^*(t)$ with contraction rate $\lambda > 0$. While \eqref{eq:strong} is non-convex, it can be reformulated via variable transformation into a convex semidefinite program (SDP), which can be efficiently solved \cite{manchester2017CCM}.

\subsubsection{CCMs, Riemannian Energy, and Invariant Tubes}
In the presence of bounded disturbance $\mathcal{W} = \{\disturb \mid \|\disturb\|_2\leq\bar{\disturb}\}$, instead of ensuring exponential convergence to the reference, the CCM-based controller can ensure that the closed-loop system remains within a robust control invariant tube. To set the stage, we define the Riemannian energy $\mathcal{E}(\state, \state^\star)=(\state(t) - \state^\star(t))^\top M(\state(t) - \state^\star(t))$. From \cite{singh2023contract}, we have
\begin{equation}
    \small\mathcal{E}(\state^\star(t), \state(t)) \leq \left[\sqrt{\mathcal{E}(\state^*(0), \state(0))}\, e^{-\lambda t} + \bar{d}(1 - e^{-\lambda t})\right]^2.\label{eq:riem_energy_bound}
\end{equation}
where $\bar{d} = \sigma(M^{\frac{1}{2}}B_{\disturb})\bar{\disturb}/\lambda$, where $\sigma$ is the singular value. 

\begin{definition}[Robust Control Invariant Tube]\label{def:RCItube}
    A robust control invariant (RCI) tube \cite{chou2022modelerrorprop,DBLP:conf/wafr/ChouOB22,singh2023contract} 
    \begin{equation}\label{eq:rci_tube}
        \Omega(\state^\star, t) := \{ \state\in\mathcal{X}: \| \state(t) - \state^\star(t)\|_2 \leq \bar{\epsilon}(t) , \forall t\in[0,T_{\rm step}]\}
    \end{equation} is defined such that any state starting inside $\Omega(\state^\star, t)$ is guaranteed to remain inside it throughout one walking step of duration $T_{\rm step}$. Assuming the system \eqref{eq:AugLIP_dyn_CCM} with $ \|\disturb\|_2 \leq\bar{\disturb}$, we construct a time-varying ellipsoid centered around the reference state $\state^\star(t)$. The maximum radial distance $\bar\epsilon(t)$ under $\bar{\disturb}$ which preserves contracting behavior is given by:
    \begin{equation*}
    \textstyle\small\bar{\epsilon}(t) = \sqrt{ \left( \int_{0}^{t} \bar{\mathcal{E}}(t)\, dt \right) \frac{1}{\sqrt{\underline{\lambda}(M)}}},
    \end{equation*}
    where $\bar{\mathcal{E}}(t)$ is the right-hand side of  \eqref{eq:riem_energy_bound}, and $\underline{\lambda}(M)$ is the minimum eigenvalue of $M$.
\end{definition}

\subsubsection{Saltation Matrix}\label{def:saltation_matrix} The saltation matrix $\Xi$ is a first-order approximation of a
system's sensitivity to discrete events \cite{kong2024saltation}, used to correctly propagate the variation about the nominal trajectory $\delta x(t)$ when a hybrid jump occurs such that $\delta x(t^+)=\Xi\delta x(t^-)$, where $\delta x(t^-), \delta x(t^+)$ denotes the pre-transition and post-transition, respectively. The dynamics model of a bipedal robot is hybrid, consisting of continuous single-contact dynamics described by \eqref{eq:AugLIP_dyn_CCM} (i.e., walking on one stance leg) and discrete double-contact transition (i.e, at the leg switching instance) with assumed duration $T_{\rm switch}$, for which we define the \textit{guard condition} $g$ and \textit{reset map} $\Delta:\state_-\to\state_+$ as:
\begin{align}\nonumber
     g&:x\local(0^+)=\Tilde{x}\local(T_{\rm step}^-), \\
 \nonumber\Delta&:\begin{cases}
  x\local(0^+) &= x\local(T_{\rm step}^-) + v\local(T_{\rm step}^-)\cdot T_{\rm switch}-u^f, \\
  v\local(0^+) &= v\local(T_{\rm step}^-), 
    \end{cases}
\end{align}

\noindent where $\Tilde{x}\local(T_{\rm step}^-)$ is predicted using \eqref{eq:sagittal_x_cons} given $(x\local(0^-),v\local(0^-), u^f)$.
Since the RCI tube in Def. \ref{def:RCItube} relies on continuous propagation of \eqref{eq:riem_energy_bound}, we adopt the saltation matrix $\Xi$ to formally define how the tube bound changes over the discrete transition, which is defined as
\begin{align}
    \nonumber \Xi = J_{\Delta} + \frac{(\mathcal{F}^+ - J_{\Delta}\mathcal{F}^-) J_g^{\top} }{J_g^{\top} \mathcal{F}^-}, \label{eq: tradsalt}
\end{align}
% \vspace{-5pt}

\noindent where $J_{\Delta}$ and $J_{g}$ are the Jacobian matrix of $\Delta$ and $g$ evaluated at $\state_{-}$, respectively, and $\mathcal{F}^{\pm}=[v\local(t) ,\omega^2x\local(t)]^\top$ represents the vector field evaluated at state $\state_{\pm}$.
\section{Methods}

\subsection{Framework Overview}
We propose a framework for bipedal navigation over rough terrain with elevation uncertainty. From sparse elevation data, the terrain is estimated using a GP model $\mu_{g|D}(\inp)$ and combined with CP to obtain coverage intervals $I_\delta$ of the true height map (Sec. \ref{sec:method_UQ}). A high-level MPC planner then generates CoM trajectories and footstep sequences with probabilistic safety guarantees (Sec.~\ref{sec:MPC}). CP-based uncertainty quantification defines contraction-based reachable tubes $\Omega(\state^{\rm loc,\star},t)$ around the MPC plans (Sec. \ref{sec:RCI_tube}), while a contraction-based flywheel torque controller stabilizes off-nominal centroidal angular momentum from terrain uncertainty (Sec.~\ref{sec:CCM_for_fullorder}). The overall method is shown in Fig.~\ref{fig:block_diagram}.

\begin{figure}[t]
    \centering
    \includegraphics[width=0.93\linewidth]{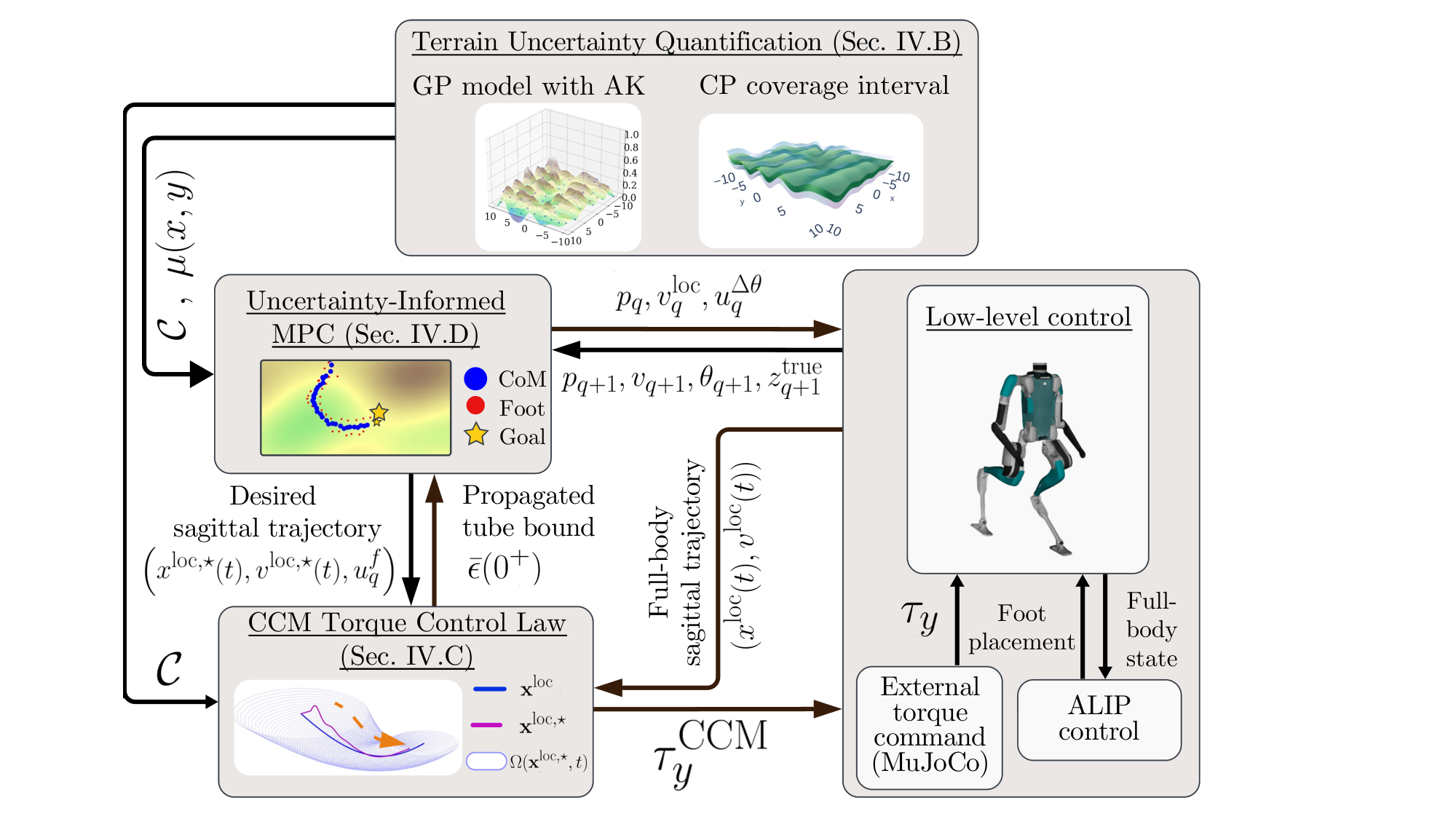}
    \caption{Overall block diagram of the proposed probabilistically-safe planning and control strategy for bipedal navigation over uncertain terrain.}
    \label{fig:block_diagram}
\end{figure}

\subsection{Uncertainty Quantification for Bipedal Footstep Planning}\label{sec:method_UQ}

\subsubsection{Split CP with GP model}
We apply Split CP to the GP mean terrain estimate to construct coverage intervals with predefined probability. Unlike GP confidence intervals, which become overly conservative in sparse data due to large variance, Split CP uses the empirical error distribution $R\ith$ to determine a threshold $\mathcal{C}$, yielding tighter intervals of the form $I_\delta=[\mu_{g|D^{\rm train}}(x,y)-\mathcal{C},\,\mu_{g|D^{\rm train}}(x,y)+\mathcal{C}]$, as validated in Sec.~\ref{sec:results}.
For brevity, we denote $\mu(\cdot)$ as the GP terrain model trained on the split dataset for the remainder of this paper.

\subsubsection{CP safe footstep constraint}
We derive the safety constraint for terrain elevation change between two adjacent footsteps defined by $(x_{q}, y_{q}, z_{q})$ and $( x_{q+1}, y_{q+1} , z_{q+1})$. We define an inequality $c(z_{q}, z_{q+1}) \geq 0$ to bound the height change by the maximum feasible limit $\Delta h_{\rm max}$, where
\begin{equation}\label{eq:c_hmax}
     c(z_{q}, z_{q+1}) = \Delta h_{\rm max} - | z_{q+1} - z_{q}|.
\end{equation}
\looseness-1At each iteration of high-level planning, we assume $z\currq = z\currq^{\rm true}$ to be known (i.e., the current foot height can be measured) and assume mean height, $z_{q+i}' = \mu(x\nextq, y\nextq)$, for all future footsteps for $i\in\{1,\ldots,H-1\}$ over the horizon $H$. We now prove that enforcing \eqref{eq:c_hmax} ensures a feasible step with probability at least $1-\delta$ despite terrain uncertainty.

\begin{lemma}[CP Safe Footstep Constraint]\label{lemma:CP_safe}
Let $c:\mathbb{R}^n \times \mathbb{R}^n \to \mathbb{R}$ be a Lipschitz continuous function in its second argument with Lipschitz constant $L>0$, i.e.,
$|c(\cdot,a)-c(\cdot,b)| \leq L\|a-b\|$ for all $ a,b$.
Let $\mathcal{C}$ denote the $(1-\delta)$-quantile of the empirical distribution of random variables $R^{(1)},\dots,R^{(k)}$.  
If the constraint
\begin{equation}
 c(z_{q}, z_{q+1}') \,\geq\, L\mathcal{C}
\end{equation}
is enforced at step $q$, then the true terrain height change between $z_{q}^{\rm true}$ and $z_{q+1}^{\rm true}$ remains below the feasible limit $\Delta h_{\rm max}$ with probability at least $1-\delta$.  
\end{lemma}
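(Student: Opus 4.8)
\emph{Proof proposal.} The plan is to transfer the distribution-free coverage guarantee of split CP (Def.~\ref{def:splitCP}) through the Lipschitz regularity of $c$ to obtain the probabilistic feasibility bound. First I would invoke the coverage statement \eqref{eq:coverage_prob} for the query point $\inp' = (x_{q+1},y_{q+1})$: its nonconformity score is $R^{(0)} = |z_{q+1}^{\rm true} - \mu(\inp')| = |z_{q+1}^{\rm true} - z_{q+1}'|$, and under exchangeability of $R^{(0)}$ with the calibration scores $R^{(1)},\dots,R^{(k)}$ the quantile construction of \cite[Lemma 1]{tibshirani2019conformal} gives $\mathbb{P}\big(R^{(0)} \le \mathcal{C}\big) \ge 1-\delta$; i.e., with probability at least $1-\delta$ the true next-step elevation lies within $\mathcal{C}$ of the mean estimate used by the planner.

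Next I would condition on the event $\mathcal{A} := \{R^{(0)} \le \mathcal{C}\}$ and apply Lipschitz continuity of $c$ in its second argument: $|c(z_q, z_{q+1}^{\rm true}) - c(z_q, z_{q+1}')| \le L\,\|z_{q+1}^{\rm true} - z_{q+1}'\| = L\,R^{(0)} \le L\mathcal{C}$ on $\mathcal{A}$. Combining this with the enforced planning constraint $c(z_q, z_{q+1}') \ge L\mathcal{C}$ yields $c(z_q, z_{q+1}^{\rm true}) \ge c(z_q, z_{q+1}') - L\mathcal{C} \ge 0$ on $\mathcal{A}$. Substituting the definition \eqref{eq:c_hmax} together with the measured-height assumption $z_q = z_q^{\rm true}$, the inequality $c(z_q, z_{q+1}^{\rm true}) \ge 0$ is exactly $|z_{q+1}^{\rm true} - z_q^{\rm true}| \le \Delta h_{\rm max}$. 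Since $\mathbb{P}(\mathcal{A}) \ge 1-\delta$, the step is feasible with probability at least $1-\delta$, which is the claim. (For the specific $c$ in \eqref{eq:c_hmax} one may take $L=1$ by the reverse triangle inequality, but the argument goes through for any $L>0$.)

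I expect no computational obstacle here; the only real subtlety is the exchangeability hypothesis, namely that the residual $R^{(0)}$ at the planned footstep is exchangeable with the calibration residuals so that the CP quantile bound applies. This is precisely the modeling assumption already built into Def.~\ref{def:splitCP} (terrain samples drawn exchangeably over the map), and I would state it explicitly as the condition under which the probabilistic guarantee holds. The remainder is a one-line use of the triangle inequality and the definition of $c$.
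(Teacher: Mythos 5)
Your proposal is correct and follows essentially the same argument as the paper's proof: apply the split-CP coverage guarantee to the residual $R = \|z_{q+1}^{\rm true}-z_{q+1}'\|$, then use Lipschitz continuity of $c$ in its second argument together with the enforced constraint $c(z_q,z_{q+1}')\geq L\mathcal{C}$ to conclude $c(z_q^{\rm true},z_{q+1}^{\rm true})\geq 0$ with probability at least $1-\delta$. Your explicit conditioning on the coverage event and the remark that $L=1$ suffices for the specific $c$ in \eqref{eq:c_hmax} are cosmetic differences only.
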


\begin{proof}
At the $q^{\text{th}}$ step we have $c(z_{q}^{\rm true}, z_{q+1}') \,\geq\, L\mathcal{C}$, 
since at the first timestep, $z_q = z_q^{\textrm{true}}$. Since $c(\cdot,\cdot)$ is $L$-Lipschitz in its second argument, we have  
\begin{equation}
 c(z_{q}^{\rm true}, z_{q+1}^{\rm true})
\;\geq\; c(z_{q}^{\rm true}, z_{q+1}') 
- L\big\|z_{q+1}^{\rm true}-z_{q+1}'\big\|.
\label{eq:CPsoft1}
\end{equation}
Substituting the constraint into \eqref{eq:CPsoft1} yields  
\begin{equation}
  c(z_{q}^{\rm true}, z_{q+1}^{\rm true})
\;\geq\; L(\mathcal{C} - \big\|z_{q+1}^{\rm true}-z_{q+1}'\big\|).
\label{eq:CPsoft2}
\end{equation}
Define the random variable 
\(R = \big\|z_{q+1}^{\rm true}-z_{q+1}'\big\|\).  
By the definition of $\mathcal{C}$, 
$\mathbb{P}\!\left(\mathcal{C}-R\geq 0\right) \,\geq\, 1-\delta$.
Thus, the right-hand side of \eqref{eq:CPsoft2} is nonnegative with probability at least $1-\delta$. Consequently,  
$\mathbb{P}\!\Bigl(c(z_{q}^{\rm true}, z_{q+1}^{\rm true}) \,\geq\, 0\Bigr) \,\geq\, 1-\delta$, 
which guarantees that the true terrain elevation change at step $q$ satisfies the feasible limit with confidence level $1-\delta$.  
\end{proof}

\subsection{RCI Tube for Phase Space Planning}\label{sec:RCIforBiped}
\subsubsection{Disturbance bounds for Augmented LIP Model}\label{sec:PIPMdisturbance}

Given an estimated height $\mu(x\currq,y\currq)$ at the current stance foot of the $q^{\rm th}$ step and a threshold $\mathcal{C}$, the true terrain height at the $q^{\rm th}$ step, denoted as $\outp^{\rm true}\currq$, lies in the interval $\outp^{\rm true}\currq \in I_\delta^q$ with a confidence level of $(1-\delta)$. The Aug-LIPM dynamics \eqref{eq:AugLIP_dyn_CCM} are propagated under the assumption that $\omega^2 = \frac{g}{z_{H}}$, which remains constant due to the assumption of a fixed apex height $z_H$. This corresponds to constraining the CoM motion to a surface plane parallel to the terrain profile \cite{zhao2017robust}. We account for uncertainty in $\outp^{\rm true}_q$ by propagating it to the asymptotic slope, which arises when the true terrain height deviates from the estimated height used in the Aug-LIPM. Specifically, instead of assuming a fixed $z_H$, we have $\omega_{\rm true}^2\in[\frac{g}{(z_{H} + \mathcal{C})}, \frac{g}{(z_{H} - \mathcal{C})}]\doteq\omega^2+C_\Delta$, where $C_{\Delta} \in [\frac{-g\mathcal{C}}{z_{\rm H}(z_{\rm H }+\mathcal{C})},\frac{g\mathcal{C}}{z_{\rm H}(z_{\rm H}-\mathcal{C})}]$.
Thus, the Aug-LIPM in \eqref{eq:AugLIP_doubleint} becomes
% \vspace{-0.05in}
\begin{equation}\label{eq:auglip_terrain}
    \ddot{x}\local=\omega^2x\local- \frac{\omega^2}{mg}\tau_y + \disturb_{\rm terrain},
\end{equation}
 where $\disturb_{\rm terrain} = C_{\Delta}(x\local - \frac{\tau_y}{mg})$, and $x\local\in \mathcal{X}\local \doteq [x\local_{\min},x\local_{\max}]$ and $\tau_y\in \mathcal{U}\local\doteq [\tau_{y,\min},\tau_{y,\max}]$ are bounded by feasible state and control constraints.

We then define a terrain-uncertainty related bounded set of disturbances $\disturb$ in \eqref{eq:AugLIP_dyn_CCM} as $\mathcal{W}_\textrm{terrain} \doteq \{\disturb : \Vert \disturb\Vert \le \bar{\disturb}_{\rm terrain}\}$, where we use the upper bound $\bar{\disturb}_{\rm terrain}= \max_{\{x\in\mathcal{X}\local, \tau\in\mathcal{U}\local\}}|C_{\Delta}| |x-\frac{\tau}{mg}|.$

\subsubsection{RCI tube for sagittal phase-space dynamics}\label{sec:RCI_tube}
We define the reference trajectory for the Aug-LIPM $\{\state^{\rm loc, \star}(t)\, \ctrl^{\rm loc, \star}(t)\}$ as sagittal trajectory guided by 
    \eqref{eq:sagittal_cons} given high-level planner's output $(x\local\currq,v\local\currq,u^f\currq)$ (i.e., $\ctrl^{\rm loc, \star}=\tau_y^\star=0$. Given the sytem \eqref{eq:AugLIP_dyn_CCM} is a linear time-invariant with control matrix $B$, the CCM $M$ in \eqref{eq:strong} provides the flywheel torque control law that exponentially stabilizes the true trajectory toward $\state^{\rm loc, \star}(t)$
\begin{equation}\label{eq:ccm_torque}
    \tau_y^{\rm CCM}(t) = - \frac{1}{2}\rho B^\top M(\state\local(t) - \state^{\rm loc, \star}(t)).
\end{equation}

\begin{remark}
     CoM sagittal states $\state\local$ governed by \eqref{eq:AugLIP_dyn_CCM} that are initialized in the RCI tube \eqref{eq:rci_tube} are guaranteed to stay within the tube for all $t\in[0,T_{\rm step}]$ under the CCM control law $\tau_y^{\rm CCM}$, which ensure forward invariance around the desired motion plans $\{\state^{\rm loc, \star},\ctrl^{\rm loc,\star}\}$. We also emphasize that exiting the RCI tube does not imply a fall; it only implies that forward invariance in \eqref{eq:rci_tube} is not guaranteed; in practice, \eqref{eq:ccm_torque} can still effectively compensate for terrain-uncertainty-induced perturbations $w_{\rm terrain}$.
\end{remark}

\subsubsection{Saltation matrix tube propagation}
\looseness-1During the discrete transition, corresponding to the foot-switching instant between walking steps, we propose a formal propagation rule for the upper bound of the Riemannian energy across the transition using the saltation matrix. The propagation is given by:
$
\bar{\mathcal{E}}^{+}(0) = \bm{\epsilon}^{\top} \Xi^{\top} M \Xi \bm{\epsilon}$, 
where $\bm{\epsilon} \in \mathbb{R}^2$ is any vector of norm $\Vert \bm{\epsilon}\Vert_2 = \bar \epsilon(T_{\rm step}^-)$, which is the RCI tube bound at the end of the step. By the construction of our guard condition and reset map in Def. \ref{def:saltation_matrix}, the determinant $\| \Xi\|>1$ implies that $\Xi$ is expansive. Given this result, we prove that the sagittal state of the robot is guaranteed to remain forward invariant over a single step with probability at least $1-\delta$ (Lem. \ref{lemma:safe_prob_of_RCI}) and over $i$ steps with probability at least $(1-\delta)^i$ (Lem. \ref{lemma:RCI_multiple_steps}).

\begin{lemma}\label{lemma:safe_prob_of_RCI} 
Given terrain coverage interval with confidence level $1-\delta$, the sagittal state governed by \eqref{eq:AugLIP_dyn_CCM} initialized inside the RCI tube $\Omega(\state^{\rm loc, \star},t)$ is guaranteed to stay within the tube under the CCM control law $\tau_y^{\rm CCM}(t)$ \eqref{eq:ccm_torque} for all $t\in[0,T_{\rm step}]$ during \textit{one} walking step under any disturbance in terrain height with probability $1-\delta$.
\end{lemma}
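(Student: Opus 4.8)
The plan is to connect two facts already established in the paper: (i) the deterministic forward-invariance of the RCI tube $\Omega(\state^{\rm loc,\star},t)$ under the CCM control law $\tau_y^{\rm CCM}$ whenever the disturbance satisfies the bound $\|\disturb\|_2 \le \bar{\disturb}$; and (ii) the probabilistic guarantee from the split CP construction that the true terrain height lies in the coverage interval $I_\delta^q$ with probability at least $1-\delta$. First I would condition on the event $\mathcal{A} \doteq \{\outp^{\rm true}_q \in I_\delta^q\}$, which by Definition~\ref{def:splitCP} and \eqref{eq:coverage_prob} has $\mathbb{P}(\mathcal{A}) \ge 1-\delta$. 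On this event, $|\outp^{\rm true}_q - \mu(x_q,y_q)| \le \mathcal{C}$, so the true asymptotic slope satisfies $\omega_{\rm true}^2 \in [\tfrac{g}{z_H+\mathcal{C}}, \tfrac{g}{z_H-\mathcal{C}}]$ and hence $C_\Delta$ lies in the interval identified in Sec.~\ref{sec:PIPMdisturbance}. Consequently the realized terrain disturbance $\disturb_{\rm terrain} = C_\Delta\big(x\local - \tfrac{\tau_y}{mg}\big)$ obeys $\|\disturb_{\rm terrain}\|_2 \le \bar{\disturb}_{\rm terrain}$, using that $x\local \in \mathcal{X}\local$ and $\tau_y \in \mathcal{U}\local$ are bounded by the feasible state/control sets; i.e., the Aug-LIPM \eqref{eq:auglip_terrain} is an instance of \eqref{eq:AugLIP_dyn_CCM} with $\disturb \in \mathcal{W}_{\rm terrain}$.

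Next I would invoke the Riemannian-energy bound \eqref{eq:riem_energy_bound} with $\bar{d} = \sigma(M^{1/2}B_{\disturb})\bar{\disturb}_{\rm terrain}/\lambda$: since the state is initialized inside $\Omega(\state^{\rm loc,\star},0)$, we have $\sqrt{\mathcal{E}(\state^{\rm loc,\star}(0),\state\local(0))} \le \bar{\epsilon}(0)$, and the closed-loop trajectory under $\tau_y^{\rm CCM}(t)$ keeps $\sqrt{\mathcal{E}(\state^{\rm loc,\star}(t),\state\local(t))}$ bounded by the convex combination of its initial value and $\bar{d}$ for all $t$, which never exceeds $\bar{\mathcal{E}}(t)$. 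Converting back to Euclidean distance via $\|\state\local(t) - \state^{\rm loc,\star}(t)\|_2 \le \sqrt{\mathcal{E}/\underline{\lambda}(M)}$ and comparing against the definition of $\bar{\epsilon}(t)$ in Definition~\ref{def:RCItube} shows $\state\local(t) \in \Omega(\state^{\rm loc,\star},t)$ for all $t\in[0,T_{\rm step}]$. This establishes forward invariance on the event $\mathcal{A}$; since $\mathbb{P}(\mathcal{A}) \ge 1-\delta$, the stated probabilistic conclusion follows.

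The main obstacle I anticipate is making step two fully rigorous: the bound \eqref{eq:riem_energy_bound} is stated in the excerpt as a consequence of \cite{singh2023contract} for a contraction-rate-$\lambda$ closed loop, and one must check that the control law \eqref{eq:ccm_torque} with the chosen $\rho$ indeed realizes that contraction rate for the specific LTI pair $(A,B)$ of the Aug-LIPM (i.e., that $M$ solving \eqref{eq:strong} exists for this unstable open-loop system), and that the time-varying radius $\bar{\epsilon}(t)$ in Definition~\ref{def:RCItube} is a valid Euclidean over-approximation of the energy sublevel set at every $t$, not merely asymptotically. A secondary subtlety is that the disturbance $\disturb_{\rm terrain}$ is state-and-control dependent rather than an exogenous signal, so I would note explicitly that the worst-case bound $\bar{\disturb}_{\rm terrain}$ computed over $\mathcal{X}\local \times \mathcal{U}\local$ dominates the realized disturbance along any trajectory that stays in the feasible set — which is consistent with remaining in the tube — so the argument is not circular. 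I would also remark, following the Remark after \eqref{eq:ccm_torque}, that this lemma addresses a single step; the multi-step version is handled separately in Lemma~\ref{lemma:RCI_multiple_steps} via the saltation matrix $\Xi$ and a union-bound over the per-step coverage events.
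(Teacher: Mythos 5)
Your proposal is correct and follows essentially the same route as the paper's proof: condition on the CP coverage event (probability at least $1-\delta$), under which the terrain disturbance lies in $\mathcal{W}_{\rm terrain}$, and then conclude tube invariance from the RCI construction of Definition~\ref{def:RCItube} under $\tau_y^{\rm CCM}$. You simply spell out the intermediate steps (the $C_\Delta$ bound and the Riemannian-energy argument) that the paper's terse proof leaves implicit, and the subtleties you flag are likewise left unaddressed in the paper.
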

\begin{proof}
Following \eqref{def:RCItube}, the RCI tube $\Omega(\state^{\rm loc, \star},t)$ satisfies an upper-bound $\bar{\epsilon}(t)$ determined by $\bar{\disturb}_{\rm terrain}$ such that $ \| \state\local(t) - \state^{\rm loc,\star}(t)\| \leq \bar{\epsilon}(t) , \forall t\in[0,T_{\rm step}]$. From the CP coverage guarantees, we have that $w \in \mathcal{W}_\textrm{terrain}$, i.e., the disturbance bound is valid, 
 with probability at least $1-\delta$. Thus, $\state^\textrm{loc}(t) \in \Omega(\state^{\rm loc, \star},t)$ with probability at least $1-\delta$.
\end{proof}

\begin{lemma}\label{lemma:RCI_multiple_steps}
% Lemma \ref{lemma:safe_prob_of_RCI} can be extended across 
\looseness-1Over $i$ walking steps, the probability of the terrain-perturbed dynamics \eqref{eq:auglip_terrain} staying in the RCI tube for $i \in \{1,\ldots,H-1\}$ is at least $(1-\delta)^i$, where $H$ is the horizon.
    
\end{lemma}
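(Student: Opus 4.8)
The plan is to reduce the $i$-step statement to an $i$-fold application of Lemma \ref{lemma:safe_prob_of_RCI} together with the saltation-matrix propagation rule, and then combine the per-step confidence levels using a union-bound / independence argument. First I would set up the notation: let $A_j$ denote the event that, at the $j$-th walking step, the true terrain height lies inside the CP coverage interval $I_\delta^{(j)}$, so that by Definition \ref{def:splitCP} we have $\mathbb{P}(A_j) \geq 1-\delta$ for each $j \in \{1,\dots,i\}$. Conditioned on $A_j$, the disturbance realized during step $j$ satisfies $w \in \mathcal{W}_{\rm terrain}$, so by the argument of Lemma \ref{lemma:safe_prob_of_RCI} the sagittal state initialized inside $\Omega(\state^{\rm loc,\star},t)$ at the start of step $j$ stays inside it for all $t \in [0,T_{\rm step}]$ under $\tau_y^{\rm CCM}$.

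Next I would handle the hand-off between consecutive steps. The RCI tube bound at the end of step $j$ is $\bar\epsilon(T_{\rm step}^-)$, and the saltation matrix propagation rule $\bar{\mathcal{E}}^+(0) = \bm\epsilon^\top \Xi^\top M \Xi \bm\epsilon$ (with $\|\bm\epsilon\|_2 = \bar\epsilon(T_{\rm step}^-)$) gives the Riemannian-energy bound that seeds step $j+1$; converting this back to a radial bound via $\underline\lambda(M)$ as in Definition \ref{def:RCItube} defines the tube $\Omega(\state^{\rm loc,\star},t)$ for step $j+1$, into which the post-transition state is mapped by the reset map $\Delta$. The key point to state carefully is that, by construction, the post-transition state of step $j$ is precisely the initial condition of step $j+1$ and lies inside the (re-seeded) tube, so the hypothesis of Lemma \ref{lemma:safe_prob_of_RCI} is satisfied at the start of step $j+1$ whenever step $j$ stayed in its tube. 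Thus "staying in the tube for $i$ steps" is exactly the event $\bigcap_{j=1}^{i} A_j$ (on $A_j$, step $j$ succeeds and correctly initializes step $j+1$). Since the terrain-height query points at distinct footsteps are drawn from the calibration distribution and the corresponding coverage events are treated as independent (each CP interval is a fresh $(1-\delta)$-coverage statement), $\mathbb{P}\big(\bigcap_{j=1}^i A_j\big) = \prod_{j=1}^i \mathbb{P}(A_j) \geq (1-\delta)^i$, which is the claimed bound. Equivalently, one may phrase this as a union bound giving failure probability at most $1-(1-\delta)^i$.

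The main obstacle I anticipate is the independence/exchangeability justification for multiplying the per-step probabilities: the CP guarantee of Definition \ref{def:splitCP} is marginal for a single query point, so to legitimately obtain $(1-\delta)^i$ one must argue either that the coverage events at the $i$ footsteps are independent (e.g., conditionally on the calibration set, or because each footstep query is an independent draw), or fall back to the conservative union bound $1-i\delta$. I would state explicitly the assumption being used — that the $i$ footstep coverage events are independent — and note that this is the same convention used in related CP-based planning work; a secondary, purely bookkeeping obstacle is verifying that the expansive saltation matrix ($\|\Xi\| > 1$) does not break the tube construction, but this only inflates $\bar\epsilon$ at the start of each step and does not affect the probability count, since forward invariance within each step is guaranteed by the CCM law regardless of the initial tube radius.
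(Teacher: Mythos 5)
Your proposal follows essentially the same route as the paper's proof: per-step invocation of Lemma~\ref{lemma:safe_prob_of_RCI}, saltation-matrix propagation of both the state deviation and the tube bound across the foot-switching transition so that the re-initialized state satisfies the hypothesis of Lemma~\ref{lemma:safe_prob_of_RCI} at the next step, and multiplication of the per-step $(1-\delta)$ coverage probabilities. The independence you flag as the main obstacle is exactly what the paper asserts (justified by re-measuring the true foot height at each step), so your treatment—making that assumption explicit, with the union bound as a conservative fallback—is consistent with, and if anything slightly more careful than, the paper's argument.
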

\begin{proof}
Lem. \ref{lemma:safe_prob_of_RCI} guarantees RCI tube invariance with probability $(1-\delta)$ for the sagittal CoM states such that $\state\local(t)\in\Omega(\state^\star,t) \iff \| \state\local(t) - \state^{\rm loc,\star}(t)\|_2 \leq \bar{\epsilon}(t)$ for all $t\in[0,T_{\rm step}]$. 
During the discrete transition, the state variation at the end of the step, $\delta x(T_{\rm step}^-)\doteq\state\local(T_{\rm step}^-)-\state^{\rm loc, \star}(T_{\rm step}^-)$, and the RCI tube bound $\bar{\epsilon}(T_{\rm step}^-)$ are both propagated by the saltation matrix as follows:
\begin{equation}
    \nonumber\delta x(0^+) = \Xi \, \delta x(T_{\rm step}^-), \quad
    \bar{\epsilon}(0^+) = \Xi \, \bar{\epsilon}(T_{\rm step}^-).
\end{equation}
If $\state\local (T_{\rm step}^-)\in\Omega(\state^{\rm loc, \star},T_{\rm step}^-)$ during the current walking step, then $\| \delta x(T_{\rm step}^-)\|\leq \bar{\epsilon}(T_{\rm step}^-)$. Since the saltation matrix is expansive with $\| \Xi\|\geq1$, we guarantee that the bound in the next walking step satisfies
\begin{equation}
    \| \Xi\delta x(T_{\rm step}^-)\| \leq \Xi\bar{\epsilon}(T_{\rm step}^-) \to
    \| \delta x(0^+)\| \leq \bar{\epsilon}(0^+).
\end{equation}
Therefore, $\state\local(0^+)\in\Omega(\state^{\rm loc, \star},0^+)$ holds for the next walking step. Since $\state\local(0^+)$ is initialized inside the RCI tube, Lem. \ref{lemma:safe_prob_of_RCI} ensures that it will remain invariant within the tube for the remainder of the step under bounded disturbance set $\mathcal{W}_{\rm terrain}$ 
with probability of $(1-\delta)$. For $i$ number of walking steps, where each step is an independent event since the true foot height is re-measured after the step, the tube invariance probability compounds according to the product rule, $\mathbb{P}\!\left( \bigcup_{i} \ (\  \state_i\local(t)\in\Omega(\state_i^{\rm loc,\star},t) ; \forall t \in[0,T_{\rm step}] )\right) = (1-\delta)^i$ for $i\in[1,...,H-1]$.
\end{proof}

\subsubsection{Contraction-based controller for full-order dynamics}\label{sec:CCM_for_fullorder}
Throughout the paper, the RCI tube $\Omega(\state^{\rm loc, \star},t)$ and the CCM control law $\tau_y^{\rm CCM}(t)$ are derived by the Augmented LIP dynamics. Here, we provide a formal proof that the contraction analysis of \ref{sec:RCIforBiped} can be extended to stabilize the full-order robot dynamics.

\looseness-1To realize full-order dynamics, we employ the ALIP planner\footnote{Since our ROM-based planner and controller are designed based on the sagittal dynamics. The lateral dynamics are considered at the lower level using the Angular Momentum LIP model \cite{gibson2022terrain}, since these motions are periodic given a fixed desired lateral foot placement.}~\cite{gibson2022terrain} to generate desired foot placements, together with a passivity-based controller~\cite{Sadeghian2017passivity} for the low-level control, which achieves asymptotic tracking at the joint level. Ankle actuation is incorporated within the passivity-based controller to enhance the ROM tracking performance~\cite{shamsah2023tamp}. To this end, we implement CCM control law as an flywheel torque applied directly at the robot’s CoM within the MuJoCo simulation environment~\cite{todorov2012mujoco}. Nonetheless, model mismatch between the ROM-based trajectory and the full-order dynamics persists due to imperfections in low-level control. We therefore state the following assumption regarding the model error.

\begin{assum}\label{assum:bound_model_error}
    \looseness-1Under a low-level controller with sufficiently stable tracking performance, the sagittal dynamics discrepancy $\disturb_{\rm model}$ between the ROM and the full-order model of the bipedal robot is bounded and belongs to $\mathcal{W}_{\rm model} \doteq \{ \disturb_{\rm model} \mid \| \disturb_{\rm model}\|\leq \bar{\disturb}_{\rm model}\}$, as shown in \cite{xiong2022_3d_unactuate}. $\disturb_{\rm model}$ can also be treated as a disturbance in the linear system \eqref{eq:AugLIP_dyn_CCM}.
\end{assum}

Similar to the approximation approach in \cite{xiong2022_3d_unactuate}, we estimate the full-order sagittal dynamics using the Aug-LIPM \eqref{eq:AugLIP_dyn_CCM}:
\begin{equation}\label{eq:full_order_approx}
    \dot{\state}^{\rm full} = A \state^{\rm full} + B  \tau_{y}^{\rm full} +  \disturb_{\rm terrain}+ \disturb_{\rm model},
\end{equation}
where $\state^{\rm full}=[x^{\rm loc,full},v^{\rm loc,full}]^\top \in \mathbb{R}^2$ is the true sagittal CoM states of the robot, $\disturb_{\rm terrain} \in \mathcal{W}_{\rm terrain}$
and $\disturb_{\rm model} \in \mathcal{W}_{\rm model}$ is treated as the bounded model discrepencies between ROM and full-order dynamics under Assumption \ref{assum:bound_model_error}. 

Without loss of generality, we can set $\state^{\rm loc, \star}$ (the Aug-LIPM reference to be tracked) as $\state^{\rm LIPM}$, since $\ctrl^{\rm loc, \star}=\tau_y^\star=0$.  We can leverage the CCM torque \eqref{eq:ccm_torque} as the full-order sagittal control input, denoted in a gain matrix form as $\tau_y^{\rm full}=\tau_y^{\rm CCM} = K(\state^{\rm full} - \state^{\rm LIPM})$, where $K = -\frac{1}{2}\rho B^\top M$. Then, the error between the full-order and ROM-based sagittal trajectory $e \doteq \state^{\rm full} -\state^{\rm LIPM}$ evolves under the closed-loop error dynamics: $ \dot{e} = (A + BK)e + \disturb_{\rm terrain} + \disturb_{\rm model}$. 

Since $A+BK$ is exponentially stable via the CCM, the error converges to a disturbance-invariant set $\Theta$: if $e_0 \in \Theta$, then $e(t) \in \Theta \ , \forall t\geq0$. By Lem.~\ref{lemma:safe_prob_of_RCI}--\ref{lemma:RCI_multiple_steps}, the RCI tube ensures that the error between the full-order and ROM-based trajectories caused by $\disturb_{\rm terrain}$ remains in $\Omega(\state^{\rm loc, \star},t)$ for all $t\geq0$ and across all steps. Since disturbances in~\eqref{eq:full_order_approx} are additive, Assumption~\ref{assum:bound_model_error} guarantees that the error due to $\disturb_{\rm model}$ also remains invariant within $\Omega(\state^{\rm loc, \star},t)$ by adjusting the bound $\bar{\epsilon}(t)$ with respect to the combined disturbance $\bar{\disturb}=\bar{\disturb}_{\rm terrain}+\bar{\disturb}_{\rm model}$.

\looseness-1In this work, we do not account for $\bar{\disturb}_{\rm model}$ in constructing the RCI tube. While this may slightly degrade tracking performance of the control law~\eqref{eq:ccm_torque}, we avoid overly conservative bounds so the entire RCI tube lies within the stable region of the phase portrait (Fig.~\ref{fig:LIPM_dynamics}(b)). We later enforce a constraint in Sec.~\ref{sec:MPC} restricting the RCI tube to the stable domain of positive orbital energy. For the bipedal robot, maintaining positive orbital energy ensures the CoM trajectory remains bounded around foot placement rather than diverging, thereby preventing falls.

\subsection{Uncertainty-Informed Model Predictive Control}\label{sec:MPC}

Given a terrain map estimated by GP mean $\mu(x,y)$ and $\mathcal{C}$, we formulate a high-level planner via MPC with horizon $H$ using global dynamics \eqref{eq:lip_dynamics} with state $\state=(x,y,z,v\local,\theta)$ and control $\ctrl = (u^f, u^{\Delta \theta})$ as follows:
\begin{subequations}
\begin{align} 
\min_{\state_{0:H}, \ctrl_{0:H-1}} \; & \| p_N - p_{\mathcal{G}} \|_{W_1}^2 + \| \theta_N - \theta_{\mathcal{G}} \|_{W_2}^2 \\[-7pt]
 &+ \textstyle\sum_{q = 0}^{H-1} \| \nabla_{x,y} \ \mu(x_q, y_q) \|_{W_3}^2 \nonumber%\label{general_mpc_cost} 
\\
\textrm{s.t.}\quad\  \; & \state _{q+1} = \Phi(\state _{q}, \bm{u} _{q}), \forall q\in[1,H-1] \label{general_mpc_dynamics}\\
    & \state_0 = \state_{\rm init}, \;  (\state _{q}, \bm{u} _{q}) \in \mathcal{XU} _{q}, \forall q    \\
    & c(z_{q}, z_{q+1}) \geq L\mathcal{C} \label{eq:mpc_cp_cons}\\
    &  z_0=\outp^{\rm true},  z_q=\mu(x\currq, y\currq) ,\forall q\\
    &  E(x^{\rm loc, \star}\currq(s), v^{\rm loc,\star}\currq(s), \bar{\epsilon}\currq(s)) > 0 ,\nonumber \\
    & \quad\quad\forall s\in[0, T_{\rm step}], \forall q, \label{eq:mpc_orbital_cons}
\end{align}
\end{subequations}
\looseness-1where $\mathcal{XU}_q = \{ 
(\state_q, \bm{u}_q) \mid 
\state_{\mathrm{lb}} \leq \state_q \leq \state_{\mathrm{ub}}$, $\bm{u}_{\mathrm{lb}} \leq \bm{u}_q \leq \bm{u}_{\mathrm{ub}} 
\}$.
The MPC cost is adopted from \cite{shamsah2024terrainaware}, which penalizes the Euclidean distance between the terminal state (i.e., the final position $p_N$ and the heading $\theta_N$) and the goal (i.e., the goal location $p_{\mathcal{G}}$ and the angle between the current position and the goal $\theta_{\mathcal{G}}$) and accumulates penalties over the norm of the terrain traversal slope $\nabla_{x,y} \ \mu(x_q, y_q)$ to avoid regions with steeper terrain. \eqref{eq:mpc_cp_cons} represents the CP safe footstep constraint, where $z_0 = \outp^{\rm true}$ and $z_q=\mu(x\currq, y\currq) ,\forall q\in\{1,\hdots,H-1\}$, implying that the true terrain height is known only at the stance foot of the current walking step for every iterations. $W_1$, $W_2$, and $W_3$ are weighting matrices.
\begin{remark}\label{remark:CP}
From Lem.~\ref{lemma:CP_safe}, \eqref{eq:mpc_cp_cons} ensures safe footstep selection with probability at least $1-\delta$ at the first step, consistent with receding-horizon MPC. For the next $(q+i)$ steps where $i \in \{1,\ldots,H-1\}$, Lem.~\ref{lemma:RCI_multiple_steps} ensures a safety probability $\mathbb{P}\!\left(c(z_q,z_{q+1})>0 \;\cup\; \cdots \;\cup\; c(z_{q+i},z_{q+i+1})>0\right) = (1-\delta)^i,$
since foot height is re-measured at each step, making safety events independent. The choice of $\delta$ for different scenarios is discussed in Sec. \ref{sec:results}.
\end{remark}

The constraint \eqref{eq:mpc_orbital_cons} enforces that the orbital energy $E$ of the continuous sagittal phase-space trajectory during the $q^{\rm th}$ step, $x^{\rm loc,\star}\currq(s), v\currq^{\rm loc,\star}(s)$ governed by \eqref{eq:sagittal_cons} given input $u_q^f$ and initial condition $x_q\local, v_q\local$ given by MPC, together with the RCI tube bounds $\bar{\epsilon}\currq(s)$ parameterized by $s \in [0,T_{\rm step}]$, remains positive. Note that $x^{\rm loc,\star}\currq(s), v\currq^{\rm loc,\star}(s)$ is also the desired reference trajectory in our contraction analysis. As seen in Fig. \ref{fig:LIPM_dynamics}(b), positive orbital energy $E>0$ ensures the robot's CoM has sufficient forward momentum to escape the gravitational pull of its current stance foot and move onto the next step~\cite{xiong2019orbitchar}. Graphically, this ensures that the RCI tube never crosses the asymptotic slope line defined in the phase-space plot. Finally, to enforce \eqref{eq:mpc_orbital_cons} over multiple walking steps within the horizon $H$, we apply the saltation matrix $\Xi$ to propagate the tube bounds from $\bar{\epsilon}\currq(T_{\rm step}^-)$ to $\bar{\epsilon}\nextq(0^+)$.
\section{Results}\label{sec:results}

\begin{figure*}[t]
    \centering
    \includegraphics[width=\linewidth]{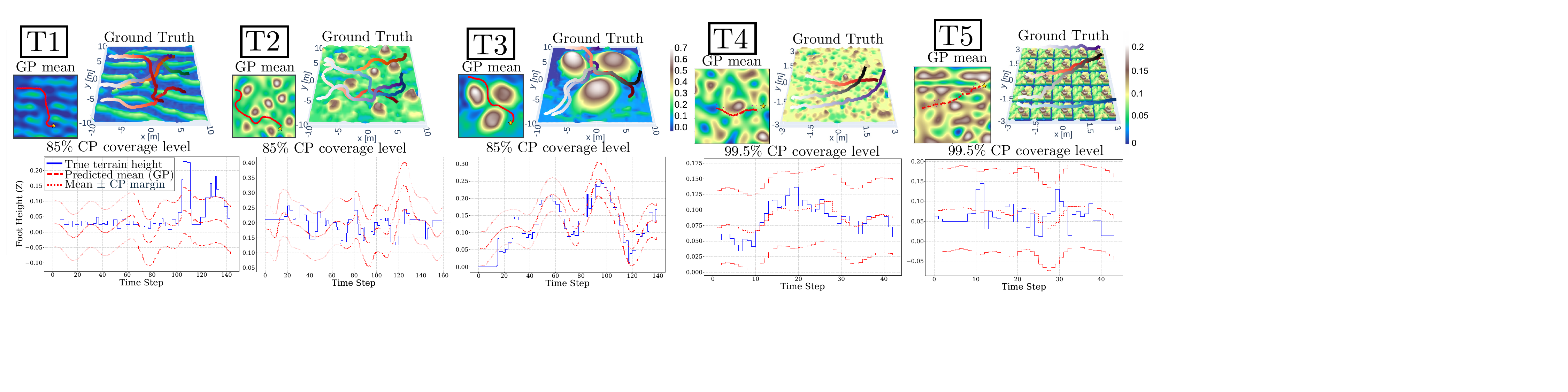}
    % \vspace{-23pt}
    \caption{Visualization of three large terrain profiles, T1: bumpy and rough, T2: wavy and coarse, and T3: hilly with a smoother surface (for the case study with the 85\% confidence level), and two smaller terrain profiles, T4 and T5: level but rough (for the case study with the 99.5\% confidence level) with sample trajectories from different runs (white indicates the start of each trajectory). For the red trajectory, we show the GP mean estimate map along with the CP confidence interval, which ensures coverage of the true terrain height at each confidence level.}
    \label{fig:terrain_visual}
    % \vspace{-20pt}
\end{figure*}

\begin{table}[t]
\centering
\caption{Simulation Results (85\% confidence level)}
% \vspace{-10pt}
\begin{tabular}{|l|c|c|c|}
\hline\label{table:CP85}
\textbf{Framework / Terrain} & \textbf{T1} & \textbf{T2} & \textbf{T3} \\
\hline
\multicolumn{4}{|c|}{\makecell{\textbf{Average Norm Error (across all steps)} \\ $\| \state\local(t)-\state^{\rm loc, \star}(t) \|_{\rm AVG}$}} \\
\hline
Ours & 0.040 & 0.045 & 0.039 \\
Ours w/o $\tau_{y}^\star$ & Fail & Fail & 0.051 \\
Baseline MPC\cite{shamsah2024terrainaware} & Fail & Fail & 0.052 \\
\hline
\multicolumn{4}{|c|}{\makecell{\textbf{Prob. of staying within the RCI Tube}\\ P($\| \state\local(t)-\state^{\rm loc, \star}(t) \| \leq \bar{\epsilon}(t) \ ; \forall t\in[0,T_{\rm step}])  $}} \\
\hline
Ours & 84.94\% & 77.09\% & 77.53\% \\
Ours w/o $\tau_{y}^\star$ & Fails & Fails & 58.03 \% \\
Baseline MPC\cite{shamsah2024terrainaware} & Fails & Fails & 54.75\% \\
\hline
\end{tabular}
\\
\vspace{0.05in}
{\scriptsize 'Fail' denotes Digit falls, reaching the goal in fewer than $5$ of $15$ trajectories. \par}
% \vspace{-0.1in}
\end{table}

\begin{table}[t]
\centering
\caption{Performance on Terrain T3}
% \vspace{-10pt}
\begin{tabular}{|c|c|c|c|} 
 \hline \label{table:T3_analysis}
 \textbf{Method} & \textbf{Sim Time} & \textbf{\# of Steps} & \textbf{CP Coverage} \\
 \hline
 Ours & 62.30 & 142.33 & 85.85\% \\
 Ours w/o $\tau_{y}^\star$     & 66.32 & 149.13 & 80.69\% \\
 Baseline MPC\cite{shamsah2024terrainaware}   & 69.75 & 154.60 & 81.62\% \\
 \hline
\end{tabular}
% \vspace{-0.3in}
\label{table:terrain3}
\end{table}

\begin{table}[t]
\centering
\caption{Simulation Results (99.5\% confidence level)}
% \vspace{-10pt}
\begin{tabular}{|l|c|c|}
\hline\label{table:results_T4T5}
\textbf{Framework / Terrain} & \textbf{T4} & \textbf{T5} \\
\hline
\multicolumn{3}{|c|}{\makecell{ $\| \state\local(t)-\state^{\rm loc, \star}(t) \|_{\rm AVG}$}} \\
\hline
Ours & 0.031 & 0.034 \\
Baseline MPC\cite{shamsah2024terrainaware} & 0.051 & 0.058 \\
\hline
\multicolumn{3}{|c|}{\makecell{P($\| \state\local(t)-\state^{\rm loc, \star}(t) \| \leq \bar{\epsilon}(t) \ ; \forall t\in[0,T_{\rm step}])  $}} \\
\hline
Ours & 93.60\% & 92.41\% \\
Baseline MPC\cite{shamsah2024terrainaware} & 65.03\% & 61.42\% \\
\hline
\end{tabular}
% \vspace{-0.3in}
\end{table}

\looseness-1We evaluate the framework in MuJoCo simulations of the Digit biped across three  $20,\text{m}\times20,\text{m}$ terrains (T1–T3, height 0–0.7 m; Fig.~\ref{fig:terrain_visual}). Results at 85\% safety appear in Table~\ref{table:CP85}, with T3 analyzed further in Table~\ref{table:T3_analysis}. We also test smaller $6\,\text{m}\times6\,\text{m}$ terrains T4 and T5 (height from 0 to 0.2 m), where improved GP accuracy, due to a smaller input domain, enables 99.5\% confidence guarantees over multiple steps (Table~\ref{table:results_T4T5}). Our MPC framework, with and without CCM torque control, is compared against the baseline in~\cite{shamsah2024terrainaware} with only traversal slope minimization. For each terrain and framework, 15 trajectories with an average of 140 steps are simulated between pre-defined start and goal positions, using full-order dynamics in MuJoCo. All runs were performed on a laptop with an Intel i7 CPU and 16 GB RAM.

\looseness-1From 2500 discretized map points, 700 are randomly sampled with a 70-30 split (490 training, 210 calibration). At 85\% confidence, the average CP threshold across three terrains is $\mathcal{C}_{\rm AVG}=0.078$ while the GP bound is $0.360$ (i.e., $\pm1.44\sigma(\inp)_{\rm AVG}$), and for 99.5\% confidence, $\mathcal{C}_{\rm AVG}=0.060$ while GP bound is $0.708$ (i.e., $\pm2.81\sigma(\inp)_{\rm AVG}$). Thus, GP model coupled with CP method yields less conservative bounds, while preserving high confidence level. Table~\ref{table:CP85} reports the average norm error (ANE) between $\state\local(t)$ and $\state^{\rm loc,\star}(t)$, measured at $200$ Hz across trials, along with the probability of RCI tube invariance, computed as  
$(\sum_{q=0}^{N}\mathds{1}^{q}_{\Omega-\text{invariance}})/N$,  
where $\mathds{1}^{q}_{\Omega\text{-invariance}}=1$ if $\|\state\local(t)-\state^{\rm loc,\star}(t)\|\leq\bar{\epsilon}(t)$ for all $t\in[0,T_{\rm step}]$, and $0$ otherwise. Table~\ref{table:T3_analysis} further details terrain T3 performance, including average simulation time, total steps, and CP coverage, defined as  
$(\sum_{q=0}^{N}\mathds{1}^{q}_{\text{true coverage}})/N$,  
with $\mathds{1}^{q}_{\text{true coverage}}=1$ if $\outp\currq^{\rm true}\in[\mu(\inp\currq)-\mathcal{C},\ \mu(\inp\currq)+\mathcal{C}]$ during the $q^{\rm th}$ step, and $0$ otherwise.

On terrains T1 and T2, the robot fails to reach the goal in fewer than 5 of 15 trials when using baseline MPC or omitting the CCM control law $\tau_y^{\rm CCM}$. On rough terrains with small, frequent height changes (T1) and steeper slopes (T2), $\tau_y^{\rm CCM}$ improves stability by correcting off-nominal centroidal angular momentum, whereas the baselines lack recovery from sudden terrain variations and thus fail. On terrain T3, $\tau_y^{\rm CCM}$ reduces ANE relative to the baseline, yielding trajectories that track the MPC plan more closely, enabling more reliable goal reaching in fewer steps.

To validate Lem.~\ref{lemma:safe_prob_of_RCI}, we evaluate the probability of remaining within the RCI tube, expected at $85\%$ (i.e., $\state\local(t)$ would exit the tube only if the true terrain lies outside the CP interval). As noted in Sec.~\ref{sec:CCM_for_fullorder}, the torque control $\tau_y^{\rm CCM}$ may suffer slight performance loss from model mismatch. On terrain T1, with bumpy ground and minimal sagittal mismatch, $\tau_y^{\rm CCM}$ maintains the CoM within the tube at the target probability. On terrains T2 and T3, where sloped paths amplify sagittal and vertical perturbations, $\tau_y^{\rm CCM}$ still preserves tube invariance under terrain disturbances and mismatch with only an 8\% reduction from the desired 85\%, markedly outperforming the baseline or without $\tau_y^{\rm CCM}$.

\looseness-1On terrain T3, both simulation time and step count are reduced compared to the baseline, regardless of whether $\tau_y^{\rm CCM}$ is applied. This improvement arises from the CP constraint in~\eqref{eq:mpc_cp_cons}, which enables risk-seeking footstep planning by allowing exploration of regions with poor GP elevation estimates, provided true terrain variations lie within CP coverage intervals. Coupled with $\tau_y^{\rm CCM}$, our framework thus yields footstep plans that ensure safe terrain transitions with a level of confidence 85\% at every step.

Finally, we evaluate our framework on smaller terrains (T4 and T5) at a higher confidence level of 99.5\%. By Lem.~\ref{lemma:RCI_multiple_steps}, this guarantees RCI tube invariance for about 20 steps with compounded safety probability $(99.5\%)^{20}\approx 90.5\%$, strongly guaranteeing that the robot will track the full, safe MPC plan. As shown in Table~\ref{table:results_T4T5}, our framework achieves over 92\% per-step tube invariance and significantly lower ANE than the baseline. Thus, for short-horizon locomotion on immediate terrain with sparse data, GP with CP provides practical probabilistic guarantees for the full trajectory, and when combined with CCM torque control, enables reliable tracking of desired motion plans with high confidence.
\section{Conclusion}
We present a planning and control framework for bipedal navigation over uncertain terrain with probabilistic safety guarantees. Two core components of this framework include constructing a contraction-based reachable tube around the CoM trajectory to ensure safe tracking under terrain uncertainty and designing a flywheel torque control law to stabilize CoM angular momentum. Our framework ensures safe traversal with high confidence per walking step over large terrains with long horizons, and guarantees high-confidence safety across the full trajectory in smaller terrain maps with shorter horizons.

\bibliographystyle{IEEEtran}
\bibliography{main}

\end{document}